\documentclass[letterpaper, 10 pt, conference]{ieeeconf}  %
\IEEEoverridecommandlockouts                              %
\overrideIEEEmargins                                      %

\title{Non-Monotone Energy-Aware Information Gathering\\for Heterogeneous Robot Teams%
}
\author{Xiaoyi Cai$^{1,*}$,
        Brent Schlotfeldt$^{2,*}$,
        Kasra Khosoussi$^1$,\\
        Nikolay Atanasov$^3$,
        George J. Pappas$^2$,
        and Jonathan P. How$^1$%
\thanks{$^*$indicates equal contribution.}
\thanks{$^1$X.\ Cai, K.\ Khosoussi, and J.\ P.\ How are with the Department of Aeronautics and Astronautics, Massachusetts Institute of Technology, Cambridge, MA 02139, USA.
	    {\tt\{xyc, kasra, jhow\}@mit.edu}.}
\thanks{$^2$B.\ Schlotfeldt, and G.\ J.\ Pappas are with the GRASP Laboratory, University of Pennsylvania, Philadelphia, PA 19104, USA.
	    {\tt\{brentsc, pappasg\}@seas.upenn.edu}.}
\thanks{$^3$N.\ Atanasov is with the Electrical and Computer Engineering Department, University of California, San Diego, La Jolla, CA 92093, USA.
	    {\tt natanasov@ucsd.edu}.}}

\usepackage{amsmath,amsthm,amssymb, amsfonts}
\usepackage{mathtools}
\usepackage{graphicx, xcolor}
\usepackage{pdfpages}
\usepackage{bm}
\usepackage{url}
\usepackage{enumerate}
\usepackage{float}
\usepackage[sort, compress]{cite}
\usepackage{cases,balance}
\usepackage[pdftex, pdfstartview={FitV}, pdfpagelayout={TwoColumnLeft},bookmarksopen=true,plainpages = false, colorlinks=true, linkcolor=black, citecolor = black, urlcolor = black,filecolor=black , pagebackref=false,hypertexnames=false, plainpages=false, pdfpagelabels ]{hyperref}

\usepackage{array}
\usepackage{booktabs}
\usepackage{multirow} %

\usepackage{comment} %
\excludecomment{comments} %

\usepackage{algorithm}
\usepackage[noend]{algpseudocode} %
\definecolor{commentclr}{RGB}{110, 149, 204}
\newcommand{\algcomment}[1]{{\color{commentclr}// #1}}
\makeatletter
\newcommand\fs@spaceruled{\def\@fs@cfont{\bfseries}\let\@fs@capt\floatc@ruled
  \def\@fs@pre{\vspace{0.6\baselineskip}\hrule height.8pt depth0pt \kern2pt}%
  \def\@fs@post{\kern2pt\hrule\relax}%
  \def\@fs@mid{\kern2pt\hrule\kern2pt}%
  \let\@fs@iftopcapt\iftrue}
\makeatother

\usepackage{tikz}
\newcommand*\circled[1]{\tikz[baseline=(char.base)]{
            \node[shape=circle,draw,inner sep= 0.3pt] (char) {#1};}}

\usepackage{lipsum}

\newcommand\copyrighttext{%
  \footnotesize © 2021 IEEE.  Personal use of this material is permitted. Permission from IEEE must be obtained for all other uses, in any current or future media, including reprinting/republishing this material for advertising or promotional purposes, creating new collective works, for resale or redistribution to servers or lists, or reuse of any copyrighted component of this work in other works.}
\newcommand\copyrightnotice{%
\begin{tikzpicture}[remember picture,overlay]
\node[anchor=south,yshift=10pt] at (current page.south) {\fbox{\parbox{\dimexpr\textwidth-\fboxsep-\fboxrule\relax}{\copyrighttext}}};
\end{tikzpicture}%
}

\newcommand{\R}{\mathbb{R}}

\DeclareMathOperator*{\argmax}{arg\,max}

\newcommand{\eg}{e.g.}
\newcommand{\ie}{i.e.}
\newcommand{\inv}{^{-1}}

\newcommand{\tr}{^\top}
\newcommand*{\defeq}{:=}
\newcommand{\logdet}{\log\det}

\theoremstyle{plain}%

\newtheorem{prop}{Proposition}

\theoremstyle{remark}
\newtheorem{remark}{Remark}

\theoremstyle{definition}

\newtheorem{problem}{Problem}
\newtheorem{definition}{Definition}

\hyphenation{op-tical net-works semi-conduc-tor limited guarantee coordination robustness
}

\newcommand{\cctrl}{c^{\mathrm{ctrl}}}

\newcommand{\cstate}{c^{\mathrm{state}}}

\newcommand{\cd}{\texttt{CD}}
\newcommand{\dls}{\texttt{DLS}}
\newcommand{\cls}{\texttt{CLS}}
\newcommand{\lexchange}{\texttt{FindProposal}}
\newcommand{\nop}{\small{\texttt{NOP}}}

\newcommand{\lsoffset}{O} %
\newcommand{\cmax}{c^{\mathrm{max}}}
\newcommand{\sopt}{S^*}
\newcommand{\sls}{S^{\mathrm{ls}}}

\newcommand{\mi}{\mathbb{I}}

\graphicspath{{Figs/}}

\graphicspath{{Algorithms/}}

\graphicspath{{Proofs/}}

\begin{document}

\maketitle
\begin{abstract}
This paper considers the problem of planning trajectories for a team of sensor-equipped robots to reduce uncertainty about a dynamical process. Optimizing the trade-off between information gain and energy cost (e.g., control effort, distance travelled) is desirable but leads to a non-monotone objective function in the set of robot trajectories. Therefore, common multi-robot planning algorithms based on techniques such as coordinate descent lose their performance guarantees. Methods based on local search provide performance guarantees for optimizing a non-monotone submodular function, but require access to all robots' trajectories, making it not suitable for distributed execution. This work proposes a distributed planning approach based on local search and shows how lazy/greedy methods can be adopted to reduce the computation and communication of the approach. We demonstrate the efficacy of the proposed method by coordinating robot teams composed of both ground and aerial vehicles with different sensing/control profiles and evaluate the algorithm's performance in two target tracking scenarios. Compared to the naive distributed execution of local search, our approach saves up to 60\% communication and 80--92\% computation on average when coordinating up to 10 robots, while outperforming the coordinate descent based algorithm in achieving a desirable trade-off between sensing and energy cost.
\end{abstract}
{\small
\section*{Supplementary Material}
\begin{center}
\url{https://www.youtube.com/watch?v=xWgFi6fwex0}
\end{center}
}

\copyrightnotice
\vspace*{-0.25in}
\floatstyle{spaceruled}%
\restylefloat{algorithm}%

\section{Introduction}\label{sec:introduction}
Developments in sensing and mobility have enabled effective utilization of robot systems in autonomous mapping\cite{carlone2014active, atanasov2015decentralized, lopez2017multi, corah2019distributed}, search and rescue\cite{kumar2004robot, tian2018search, queralta2020collaborative}, 
and environmental monitoring~\cite{shkurti2012multi, lan2016rapidly, notomista2020persistification, popovic2020informative}. 
These tasks require spatiotemporal information collection which can be achieved more efficiently and accurately by larger robot teams, rather than relying on individual robots. Robot teams may take advantage of heterogeneous capabilities, require less storage and computation per robot, and may achieve better environment coverage in shorter time\cite{grocholsky2006cooperative,sukhatme2007design,tokekar2016sensor,rizk2019cooperative}.
Task-level performance is usually quantified by a measure of information gain, where typically the marginal improvements diminish given additional measurements (\textit{submodularity}), and adding new measurements does not reduce the objective (\textit{monotonicity}). Although planning optimally for multi-robot sensing trajectories is generally intractable, these two properties allow for \emph{near-optimal} approximation algorithms that scale to large robot teams, while providing worst-case guarantees. Additionally, practical implementations often need to consider various measures for energy expenditure, such as control effort or distance travelled. 
A common approach is to impose fixed budgets, which preserves submodularity and monotonicity of the objective, so that existing algorithms may still be used \cite{singh2009efficient, singh2009nonmyopic , jorgensen2018team}.

\begin{figure} [t]
\centering
\includegraphics[width=0.95\columnwidth]{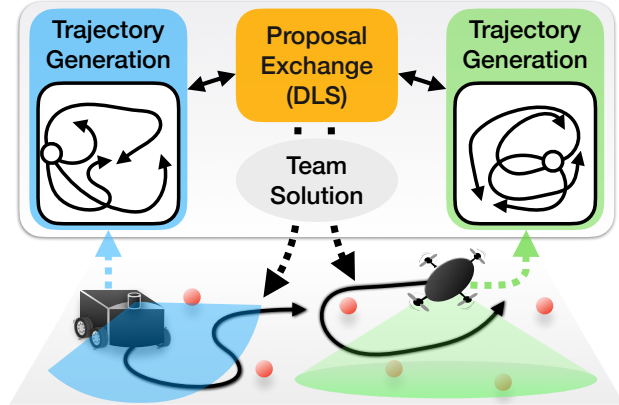}
\vspace*{-.1in}
\caption{Overview of the proposed distributed planning approach for non-monotone information gathering (see Sec.~\ref{sec:multi_robot_planning}). Robots generate individual candidate trajectories and jointly build a team plan via distributed local search (\dls{}), by repeatedly proposing changes to the collective trajectories.}
    \label{fig:front_cover}
\end{figure}

In this paper, we are motivated by scenarios where robots, with potentially different sensing and control capabilities, 
seek a desired trade-off between sensing and energy cost. Specifically, we formulate an \textit{energy-aware active information acquisition} problem, where the goal is to plan trajectories for a team of heterogeneous robots to maximize a weighted sum of information gain and energy cost. One key observation
is that adding the energy cost breaks the \textit{monotonicity} of the objective, violating an assumption held by existing approximation algorithms.  
Thus, we propose a new distributed planning algorithm based on local search~\cite{lee2009non} (see Fig.~\ref{fig:front_cover}) that has a worst-case guarantee for the non-monotone objective. We also show how to reduce the method's computation and communication to improve scalability.

\noindent\textit{\textbf{Related Work.}}
Our work belongs to the category of multi-robot informative path planning, where robots plan sensing trajectories to reduce uncertainty about a dynamic process (\eg,\cite{singh2009efficient,atanasov2015decentralized,schlotfeldt2018anytime,kantaros2019asymptotically,corah2019distributed, jorgensen2018team, dames2017detecting, viseras2020distributed, levine2010information, best2019dec}).
To alleviate the computational complexity, which is exponential in the number of robots, approximation methods have been developed to produce near-optimal solutions for a submodular and monotone objective (\eg, mutual information).
A common technique is coordinate descent, where robots plan successively while incorporating the plans of previous robots. %
Ref.~\cite{singh2009efficient} showed that coordinate descent extends the near-optimality of a single-robot planner to the multi-robot scenario.
This result was extend to dynamic targets by~\cite{atanasov2015active}, achieving at least $50\%$ of the optimal performance regardless of the planning order. 
Refs.~\cite{jorgensen2018team, dames2017detecting} decentralized the greedy method~\cite{fisher1978analysis} by adding the best single-robot trajectory to the team solution in every round.
Ref.~\cite{corah2019distributed} proposed distributed sequential greedy algorithm to alleviate the inefficiency in sequential planning.
 
Our problem can be seen as non-monotone submodular maximization subject to a partition matroid constraint (see Sec.~\ref{sec:problem_formulation}), for which approximation algorithms already exist.
The first such algorithm was developed by~\cite{lee2009non} based on local search, which can handle multiple matroid constraints. Extending~\cite{lee2009non}, ref.~\cite{gupta2010constrained} proposed a greedy-based approach that can handle multiple independence systems (more general than matroids), but has a worse approximation ratio given a single matroid. Other methods use multilinear relaxation such as~\cite{feldman2011unified, gharan2011submodular} for better approximation ratios, but require significant computation. 
Applying some of these ideas in robotics, ref.~\cite{segui2015decentralised} used the continuous greedy method by~\cite{feldman2011unified} for decentralized multi-robot task assignment. In the same domain, ref.~\cite{shin2019sample} combined sampling, greedy method, and lazy evaluation~\cite{minoux1978accelerated} to achieve fast computation.
We decided to build upon~\cite{lee2009non} for its simplicity and guarantees. We also attempt to incorporate well-known techniques like greedy method and lazy evaluation, but they are specialized in the context of local search, as detailed in Sec.~\ref{sec:dls}.

\noindent\textit{\textbf{Contributions.}} 
The main limitation of the prior works is the assumption of monotonicity of the objective function. Problems without monotonicity, such as the energy-aware problem we propose, cannot be solved by the above methods while retaining their near-optimality properties. In contrast, our proposed algorithm provides a theoretical performance guarantee even for non-monotone objectives. In this work:

\begin{itemize}
    \item We propose a distributed algorithm based on local search where robots collaboratively build a team plan by proposing modifications to the collective trajectories;
    \item We reduce its computation and communication requirements by prioritizing search orders of local search and warm starting with greedy solutions, respectively;
    \item We show that the proposed algorithm outperforms a state-of-the-art algorithm for multi-robot target tracking in coordinating a team of heterogeneous robots, while trading off sensing performance and energy expenditure.
\end{itemize}
\section{Preliminaries}\label{sec:prelim}
We review some useful definitions. Let $g:2^\mathcal{M}\to \R$ be a set function defined on the ground set $\mathcal{M}$ consisting of finite elements. Let $g(a|S)\defeq g(S \cup \{a\}) - g(S)$ be the discrete derivative, or the marginal gain, of $g$ at $S$ with respect to $a$.
\begin{definition}[Submodularity]\label{def:submodularity}
Function $g$ is submodular if for any $S_1\subseteq S_2\subseteq\mathcal{M}$ and $a\in\mathcal{M}\backslash S_2$, $g(a|S_1) \geq g(a|S_2)$.
\end{definition}

\begin{definition}[Monotonicity]\label{def:monotonicity}
Function $g$ is monotone if for any $S_1\subseteq S_2 \subseteq \mathcal{M}$, $g(S_1) \leq g(S_2)$.
\end{definition}
\section{Problem Formulation}\label{sec:problem_formulation}
Consider robots indexed by $i\in \mathcal{R} \defeq  \{1,\dots,n\}$, whose states are $x_{i, t}\in\mathcal{X}_i$ at time $t = 0,\ldots,T$, and dynamics are:
\begin{equation}
\label{eq:robot_dynamics}
x_{i, t+1} = f_i(x_{i, t}, u_{i, t}),
\end{equation}
where $u_{i, t}\in\mathcal{U}_i$ is the control input and $\mathcal{U}_i$ is a finite set. We denote a control sequence as $\sigma_i = u_{i,0}, \dots, u_{i,T-1} \in \mathcal{U}_i^T$.

The robots' goal is to track targets with state $y\in\R^{d_y}$ that have the following linear-Gaussian motion model:
\begin{equation}
    y_{t+1} = A_t y_t + w_t,\enspace\  w_t \sim \mathcal{N}(0,W_t),
    \label{eq:target_motion_model}
\end{equation}
where $A_t\in\R^{d_y \times d_y}$ and $w_t$ is a zero-mean Gaussian noise with covariance $W_t \succeq 0$.
The robots have sensors that measure the target state subject to an observation model:
\begin{equation}
    z_{i,t} = H_{i,t }(x_{i,t})y_t+v_{i,t}(x_{i,t}),\enspace\  v_{i,t}\sim\mathcal{N}(0,V_{i,t}(x_{i,t})),
    \label{eq:sensor_observation_model}
\end{equation}
where $z_{i,t}\in\R^{d_{z_i}}$ is the measurement taken by robot $i$ in state $x_{i,t}$, $H_{i,t}(x_{i,t})\in\R^{d_{z_i} \times d_y}$, and $v_{i,t}(x_{i,t})$ is a state-dependent Gaussian noise, whose values are independent at any pair of times and across sensors.
The observation model is linear in target states but can be nonlinear in robot states. If it depends nonlinearly on target states, we can linearize it around an estimate of target states to get a linear model.

We assume every robot $i$ has access to $N_i$ control trajectories $\mathcal{M}_i = \{\sigma_i^k\}_{k=1}^{N_i}$ to choose from. Denote the set of all control trajectories as $\mathcal{M} = \cup_{i=1}^{n} \mathcal{M}_i$ and its size as ${N= |\mathcal{M}|}$.
Potential control trajectories can be generated by various single-robot information gathering algorithms such as~\cite{atanasov2014information, hollinger2014sampling, lan2013planning, levine2010information}.
The fact that every robot cannot execute more than one trajectory can be encoded as a partition matroid $(\mathcal{M}, \mathcal{I})$, where $\mathcal{M}$ is the ground set, and $\mathcal{I}= \{ S \subseteq \mathcal{M} \mid |S \cap \mathcal{M}_i| \leq 1\ \forall i\in\mathcal{R}\}$ consists of all admissible subsets of trajectories.
Given $S\in\mathcal{I}$, we denote the joint state of robots that have been assigned trajectories as $x_{S,t}$ at time $t$, and their indices as
$\mathcal{R}_S\defeq \{i \mid |\mathcal{M}_i \cap S| = 1\ \forall\ i\in\mathcal{R}\}$.
Also, denote the measurements up to time $t\leq T$ collected by robots $i\in\mathcal{R}_S$ who follow the trajectories in $S$ by $z_{S, 1:t}$.

Due to the linear-Gaussian assumptions in \eqref{eq:target_motion_model} and  \eqref{eq:sensor_observation_model}, the optimal estimator for the target states is a Kalman filter. The target estimate covariance $\Sigma_{S,t}$ at time $t$ resulting from robots $\mathcal{R}_S$ following trajectories in $S$ obeys:
\begin{equation}
    \Sigma_{S, t+1} = \rho_{S,t+1}^{\mathrm e} ( \rho_{t}^{\mathrm p} (\Sigma_{S,t}), x_{S, t+1}),
\end{equation}
where $\rho_{t}^{\mathrm p}(\cdot)$ and $\rho_{S,t}^{\mathrm e}(\cdot, \cdot)$ are the Kalman filter prediction and measurement updates, respectively:
\begin{equation*}
    \begin{aligned}
        \textbf{Predict:}&&\rho_{t}^{\mathrm p} (\Sigma) & \defeq A_t\Sigma A_t\tr +W_t, \\
        \textbf{Update:} && \rho_{S,t}^{\mathrm e} (\Sigma, x_{S, t}) & \defeq \left(\Sigma\inv + \sum_{i\in\mathcal{R}_S} M_{i,t} (x_{i,t}) \right)\inv,\\
        && M_{i,t} (x_{i,t})& \defeq H_{i,t}(x_{i,t}) V_{i,t} (x_{i,t})\inv H_{i,t}(x_{i,t})\tr.
    \end{aligned}
\end{equation*}

When choosing sensing trajectories, we want to capture the trade-off between sensing performance and energy expenditure, which is formalized below.
\begin{problem}[Energy-Aware Active Information Acquisition]
\label{prob:MA_fuel_aware}
Given initial states $x_{i,0}\in \mathcal{X}_i$ for every robot $i\in\mathcal{R}$, a prior distribution of target state $y_0$, and a finite planning horizon $T$, find a set of trajectories $S\in\mathcal{M}$ to optimize the following:
\begin{equation} \label{eq:sensing_and_energy_objective}
\max_{S \in \mathcal{I}}\  J(S) \defeq  \mi(y_{1:T}; z_{S,1:T}) -  C(S),
\end{equation}
where $\mi(y_{1:T}; z_{S,1:T})= \frac{1}{2}\sum_{t=1}^T \big[ \logdet\big( \rho_{t-1}^{\mathrm p} (\Sigma_{S,t-1})\big) - \logdet (\Sigma_{S,t}) \big] \geq 0 $ is the mutual information between target states and observations\footnote{Our problem differs from sensor placement problems that consider the mutual information between selected and not selected sensing locations.},
and $C:2^{\mathcal{M}}\to\R$ is defined as:
\begin{align}
C(S) & \defeq \sum_{\sigma_i \in S} r_i\,C_{i}(\sigma_i),\label{eq:fuel_C}
\end{align}
where $0\leq C_i(\cdot) \leq \cmax$ is a non-negative, bounded energy cost for robot $i$ to apply controls $\sigma_i$ weighted by $r_i \geq 0$.
\end{problem}

\begin{remark}
Robots are assumed to know others' motion models~\eqref{eq:robot_dynamics} and observation models~\eqref{eq:sensor_observation_model} before the mission, so that any robot can evaluate~\eqref{eq:sensing_and_energy_objective} given a set of trajectories.
\end{remark}

\begin{remark}
The optimization problem~\eqref{eq:sensing_and_energy_objective} is non-monotone, because adding extra trajectories may worsen the objective by incurring high energy cost $C(S)$. 
Thus, the constraint $S\in\mathcal{I}$ may not be tight, \ie, some robots may not get assigned trajectories. This property is useful when a large repository of heterogeneous robots is available but only a subset is necessary for the given tasks.
\end{remark}

\begin{remark}
The choice of \eqref{eq:sensing_and_energy_objective} is motivated by the energy-aware target tracking application. However, the proposed algorithm in Sec.~\ref{sec:multi_robot_planning} is applicable to any scenario where  $J(S)$ is a submodular set function that is not necessarily monotone, but can be made non-negative with a proper offset.
\end{remark}

Solving Problem~\ref{prob:MA_fuel_aware} is challenging because adding energy cost $C(S)$ breaks the monotonicity of the objective, a property required for approximation methods (e.g., coordinate descent \cite{atanasov2015decentralized} and greedy algorithm \cite{fisher1978analysis}) to maintain performance guarantees. This is because these methods only add elements to the solution set, which always improves a monotone objective, but can worsen the objective in our setting, and may yield arbitrarily poor performance. We now propose a new distributed algorithm based on local search~\cite{lee2009non}.

\section{Multi-Robot Planning}\label{sec:multi_robot_planning}
We first present how local search~\cite{lee2009non} can be used to solve Problem~\ref{prob:MA_fuel_aware} with near-optimal performance guarantee. Despite the guarantee, local search is not suitable for distributed robot teams, because it assumes access to all locally planned robot control trajectories which can be communication-expensive to gather. To address this problem, we propose a new distributed algorithm that exploits the structure of a partition matroid to allow robots to collaboratively build a team plan by repeatedly proposing changes to the collective trajectories. Moreover, we develop techniques to reduce its computation and communication to improve scalability.

In the following subsections, we denote $g:2^{\mathcal{M}}\to \R$ as the non-negative, submodular oracle function used by local search,
where the ground set $\mathcal{M}$ contains robot trajectories.

\subsection{Centralized Local Search (\cls{})}
We present the original local search~\cite{lee2009non} in our setting with a single partition matroid constraint. We refer to it as centralized local search (\texttt{CLS}, Alg.~\ref{alg:cls}) because it requires access to trajectories $\mathcal{M}$ from all robots. The algorithm proceeds in two rounds to find two candidate solutions $S_1, S_2\in\mathcal{I}$. In each round $k=1,2$, solution $S_k$ is initialized with a single-robot trajectory maximizing the objective (Line~\ref{cls:best_singleton}).
Repeatedly, $S_k$ is modified by executing one of the \textbf{Delete}, \textbf{Add} or \textbf{Swap} operations, if it improves the objective by at least $(1+\frac{\alpha}{N^4})$ of its original value (Lines~\ref{cls:start_local_ops}--\ref{cls:end_local_ops}), where $\alpha>0$ controls run-time and performance guarantee.
This procedure continues until $S_k$ is no longer updated, and the next round begins without considering $S_k$ in the ground set $\mathcal{M}$ (Line~\ref{cls:remove_sk}). 
Lastly, the better of $S_1$ and $S_2$ is returned.

\begin{algorithm}[t]
\caption{Centralized Local Search~\cite{lee2009non} (\cls)} 
\label{alg:cls}
\small %
\begin{algorithmic}[1]
\State \textbf{require} $\alpha>0$, ground set $\mathcal{M}$, admissible subsets $\mathcal{I}$, oracle $g$
\State $N {\footnotesize\leftarrow} | \mathcal{M} |$
\State $S_1, S_2 {\footnotesize\leftarrow} \emptyset$ 

\For{$k=1, 2$}
    \State $S_k {\footnotesize\leftarrow} \{ \argmax_{a \in \mathcal{M} } g(\{ a \}) \}$ \hfill\algcomment{Initialize with best traj.} \label{cls:best_singleton} 
    \State \textbf{while} resultant $S_k'$ from \circled{1}, \circled{2} or \circled{3} satisfies $S_k'\in\mathcal{I}$ and $g(S_k') \geq (1+ \frac{\alpha}{N^4}) g(S_k)$ \textbf{do} $S_k {\footnotesize\leftarrow} S_k'$ \algcomment{Repeat local operations}\label{cls:start_local_ops}
    \State \hspace*{0.3em}  \circled{1} \textbf{Delete:} $S_k' {\footnotesize\leftarrow} S_k \backslash \{d\}$, where $d\in S_k$
    \State \hspace*{0.3em} \circled{2} \textbf{Add:} $S_k' {\footnotesize\leftarrow} S_k \cup \{a\}$, where $a\in \mathcal{M}\backslash S_k$
    \State \hspace*{0.4em} \circled{3} \textbf{Swap:} $S_k' {\footnotesize\leftarrow} S_k \backslash \{d\} \cup \{a\}$, where $d\in S_k,\,a\in \mathcal{M}\backslash S_k$\label{cls:end_local_ops}

    \State $\mathcal{M} {\footnotesize\leftarrow} \mathcal{M} \backslash S_k$ \label{cls:remove_sk}
\EndFor

\State \textbf{return} $\argmax_{S \in \{S_1, S_2\}} g( S)$ \label{cls:return}
\end{algorithmic}
\end{algorithm}

One important requirement of \texttt{CLS} is that the objective function $g$ is non-negative. With the objective from Problem~\ref{prob:MA_fuel_aware}, this may not be true, so we add an offset $\lsoffset$. The next proposition provides a worst-case performance guarantee for applying Alg.~\ref{alg:cls} to Problem~\ref{prob:MA_fuel_aware} after properly offsetting the objective to be  non-negative.

\begin{prop}\label{thm:LS_guarantee}
Consider that we solve Problem~\ref{prob:MA_fuel_aware} whose objective is made non-negative by adding a constant offset: 
\begin{equation}\label{eq:LS_optimization}
    \max_{S\in\mathcal{I}} \ g(S) \defeq J(S) +\lsoffset,
\end{equation}
where $\lsoffset  \defeq 
\sum_{i=1}^{n}r_i \cmax$. 
Denote $\sopt$ and $\sls$ as the optimal solution and solution obtained by \cls{} (Alg.~\ref{alg:cls}) for~\eqref{eq:LS_optimization}, by using  $g(\cdot)$ as the oracle. We have the following worst-case performance guarantee for the objective:
\begin{equation}
    0 \leq g(\sopt) \leq 4(1+\alpha) g(\sls).
\end{equation}
\end{prop}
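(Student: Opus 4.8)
The plan is to reduce the claim to the approximation guarantee of local search for non-monotone submodular maximization over a single matroid established in~\cite{lee2009non}, after verifying that our offset objective $g$ meets that theorem's two hypotheses: non-negativity and submodularity. The lower bound $0 \le g(\sopt)$ is then immediate, since the offset $\lsoffset$ is chosen precisely to render $g$ non-negative on all of $\mathcal{I}$.

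First I would establish non-negativity. For any feasible $S \in \mathcal{I}$, the partition-matroid constraint forces $|S \cap \mathcal{M}_i| \le 1$, so each robot contributes at most one term to the modular cost~\eqref{eq:fuel_C}; together with $0 \le C_i(\cdot) \le \cmax$ and $r_i \ge 0$ this gives $C(S) \le \sum_{i=1}^{n} r_i \cmax = \lsoffset$. Since the mutual information term satisfies $\mi(y_{1:T}; z_{S,1:T}) \ge 0$, we obtain $g(S) = \mi(y_{1:T}; z_{S,1:T}) - C(S) + \lsoffset \ge 0$, as required.

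Next I would argue submodularity of $g$. The energy term $-C$ is modular (hence submodular) because $C$ is a sum of per-element costs, and adding the constant $\lsoffset$ preserves submodularity; it therefore suffices to show that $S \mapsto \mi(y_{1:T}; z_{S,1:T})$ is submodular in the selected trajectories. This follows from the standard fact that the log-det information gain of a Kalman filter is submodular in the set of measurements, using the expression for $\mi$ in~\eqref{eq:sensing_and_energy_objective} together with the prediction/update recursions for $\Sigma_{S,t}$; the key monotonicity of the information filter in the added sensor-information matrices $M_{i,t}$ drives the diminishing-returns property. Combined with the partition matroid $(\mathcal{M}, \mathcal{I})$ from the problem formulation, this places~\eqref{eq:LS_optimization} exactly in the setting of a non-negative submodular function subject to one matroid.

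Finally I would invoke the local-search guarantee of~\cite{lee2009non}, specialized to a single matroid, which yields the $4+\epsilon$ ratio; the $(1 + \tfrac{\alpha}{N^4})$ improvement threshold and the two-round ``delete/add/swap, then remove $S_1$'' structure of Alg.~\ref{alg:cls} are exactly the ingredients their analysis needs, and tracking the dependence of $\epsilon$ on $\alpha$ gives $g(\sopt) \le 4(1+\alpha) g(\sls)$. I expect the main obstacle to be this last step: faithfully matching our parameterization (the exponent $N^4$ and the constant $\alpha$) to the approximate-local-optimality conditions in~\cite{lee2009non}, and confirming that the non-monotone analysis---which crucially relies on running two rounds on disjoint ground sets and returning the better solution---carries over unchanged once non-negativity and submodularity are in hand.
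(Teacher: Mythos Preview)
Your proposal is correct and follows essentially the same route as the paper: verify that $g$ is non-negative and submodular (mutual information submodular, $C$ modular, constant offset harmless), note the partition-matroid constraint, and then invoke the single-matroid case of the local-search guarantee in~\cite{lee2009non}. The paper's proof is terser---it simply cites~\cite{atanasov2015decentralized} for submodularity of the mutual-information term and then plugs $k=1$, $\epsilon=\alpha$ into~\cite[Thm.~4]{lee2009non}---so your anticipated ``obstacle'' of matching the parameterization is resolved by that direct substitution.
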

\begin{proof}
In~\eqref{eq:sensing_and_energy_objective}, mutual information is a submodular set function defined on measurements provided by selected trajectories~\cite{atanasov2015decentralized}.
Moreover, $C(S)$ is modular given its additive nature:
\begin{equation}
    C(S) = \sum_{\sigma_i \in S} r_i C_{i}( \sigma_i)\geq 0.
\end{equation}
Since mutual information is non-negative, \eqref{eq:LS_optimization} is a submodular non-monotone maximization problem with a partition matroid constraint. 
Setting $k=1$ and $\epsilon=\alpha$ in~\cite[Thm. 4]{lee2009non}, the proposition follows directly after rearranging terms.
\end{proof}

\begin{remark}
Having the constant $\lsoffset$ term in~\eqref{eq:LS_optimization} does not change the optimization in Problem~\ref{prob:MA_fuel_aware}, but ensures that the oracle used by \cls{} (Alg.~\ref{alg:cls}) is non-negative so that the ratio $(1+\frac{\alpha}{N^4})$ correctly reflects the sufficient improvement condition. 
\end{remark}

Besides the communication aspect that \cls{} requires access to all robot trajectories, running it naively can incur significant computation. In the worst case, \cls{} requires $\mathcal{O}(\frac{1}{\alpha}N^6\log(N))$ oracle calls\footnote{For 2 solution candidates, each requires $\mathcal{O}(\frac{1}{\alpha}N^4\log(N))$ local operations, and $N^2$ oracle calls to find each local operation in the worst case.}, where $N$ is the total number of trajectories~\cite{lee2009non}. 
Even on a central server, run-time may be greatly reduced by using our proposed method (see Sec.~\ref{sec:experiments}).

\subsection{Distributed Local Search (\texttt{DLS})} \label{sec:dls}
This section proposes a distributed implementation of local search
(see Algs.~\ref{alg:dls} and~\ref{alg:local_exchange} written for robot $i$).
Exploiting the structure of the partition matroid, \dls{} enables each robot to propose local operations based on its own trajectory set, while guaranteeing that the team solution never contains more than one trajectory for every robot.
All steps executed by \cls{} can be distributedly proposed, so \dls{} provides the same performance guarantee in Theorem~\ref{thm:LS_guarantee}.
By prioritizing search orders and starting with greedy solutions, we reduce computation and communication of \dls{}, respectively.

\subsubsection{Distributed Proposal}
Every proposal consists of two trajectories $(d,a)$, where $d$ is to be deleted from and $a$ is to be added to the solution set.
We also define a special symbol ``$\nop$'' that leads to no set operation, \ie, $S_k \cup \{\nop\} = S_k \backslash \{\nop\} = S_k$. 
Note that $(d,\nop)$, $(\nop, a)$ and $(d, a)$ are equivalent to the \textbf{Delete}, \textbf{Add} and \textbf{Swap} steps in \cls{}.

Every robot $i$ starts by sharing the size of its trajectory set $|\mathcal{M}_i|$ and its best trajectory $a^*_i \in \mathcal{M}_i$ in order to initialize $S_k$ and $N$ collaboratively (Alg.~\ref{alg:dls} Lines~\ref{dls:start_N_Sk}--\ref{dls:end_N_Sk}).
Repeatedly, every robot $i$ executes the subroutine \lexchange{} (Alg.~\ref{alg:local_exchange}) in parallel, in order to propose changes to $S_k$ (Alg.~\ref{alg:dls} Lines~\ref{dls:start_find_proposal}--\ref{dls:end_find_proposal}). Since any valid proposal shared by robots improves the objective, the first $(d, a)\neq (\nop, \nop)$ will be used by all robots to update $S_k$ in every round (Alg.~\ref{alg:dls} Lines~\ref{dls:start_receive_proposal}--\ref{dls:end_receive_proposal}).
We assume instantaneous communication, so robots always use a common proposal to update their copies of $S_k$. Otherwise, if delay leads to multiple valid proposals, a resolution scheme is required to ensure robots pick the same proposal.

In \lexchange{} (Alg.~\ref{alg:local_exchange}), an outer loop looks for potential deletion $d\in S_k$ (Alg.~\ref{alg:local_exchange} Lines~\ref{lexchange:start_delete}--\ref{lexchange:end_delete}). Otherwise, further adding $a\in\mathcal{M}_i$ is considered, as long as the partition matroid constraint is not violated (Alg.~\ref{alg:local_exchange} Lines~\ref{lexchange:start_check_partition_matroid}--\ref{lexchange:end_check_partition_matroid}). Next, we discuss how to efficiently search for trajectories to add.

\begin{algorithm}[t]
\caption{Distributed Local Search (\dls)}
\label{alg:dls}
\small %
\begin{algorithmic}[1]
\State \textbf{require} $\alpha>0$, trajectories $\mathcal{M}_i$, oracle $g$
\State Sort $\mathcal{M}_i$ in descending order based on $g(a|\emptyset)$ for all $a\in\mathcal{M}_i$\label{dls:pq_sort}
\State $S_1, S_2 \leftarrow \emptyset$
\For{$k=1, 2$}
    \State Broadcast $| \mathcal{M}_i |$ and $a_i^*\in\mathcal{M}_i$ that maximizes $ g(\{a_i^* \})$\label{dls:start_N_Sk}
    \State $S_k \leftarrow \{a^*\}$, where $a^*\in \{ a_i^* \}_{i=1}^n$ maximizes $ g(\{a^* \})$
    \State $N \leftarrow \sum_{i=1}^n | \mathcal{M}_i |$\label{dls:end_N_Sk}

    \Repeat\label{dls:start_find_proposal}
        \State Run \lexchange($S_k, \mathcal{M}_i,\alpha,N, g$) in background\label{dls:run_find_proposal}
        \If{Receive $(d, a) \neq (\nop, \nop)$}\label{dls:start_receive_proposal}
            \State Terminate \lexchange{} if it has not finished
            \State $S_k \leftarrow S_k \backslash \{ d \} \cup \{a\}$\label{dls:end_receive_proposal}
        \EndIf
    \Until{Receive $(d, a) = (\nop, \nop)$ from all robots}\label{dls:end_find_proposal}
    
    \State $\mathcal{M}_i \leftarrow \mathcal{M}_i \backslash S_k$
\EndFor %
\State \textbf{return} $ \argmax_{ S \in \{S_1, S_2\}} g( S)$
\end{algorithmic}
\end{algorithm}
\begin{algorithm}[t]
\caption{Find Proposal (\lexchange)}
\label{alg:local_exchange}
\small
\begin{algorithmic}[1]
\State 
        \textbf{require}
         $S_k$, %
         $\mathcal{M}_i$, %
         $\alpha>0$, $N$,
         $g$

\For{$d\in S_k$ or $d=\nop$}\ \algcomment{Delete $d$, or no deletion}\label{lexchange:start_delete}
    \State $S_k^- \leftarrow S_k \backslash \{d\}$ \label{lexchange:SK_minus}
    \State $\Delta \leftarrow (1+\frac{\alpha}{N^4}) g(S_k) - g(S_k^-)$\ \algcomment{$\Delta$: deficiency of $S_k^-$}\label{lexchange:deficiency_of_SK_minus}
    \If{$\Delta \leq 0$}
        \State \textbf{broadcast} $(d,\nop)$ \label{lexchange:end_delete}
    \EndIf
    \If{$\exists\ a\in S_k^-$ planned by robot $i$}\label{lexchange:start_check_partition_matroid}
        \State \textbf{continue} \algcomment{Cannot add due to partition matroid}\label{lexchange:end_check_partition_matroid}
    \EndIf
    \For{$a\in\mathcal{M}_i$ in sorted order} \algcomment{Add $a$}\label{lexchange:iter_sorted_order}
        \If{$g(a|\emptyset) < \Delta$}\label{lexchange:no_promising_trajs}
            \State \textbf{break} \algcomment{No $a\in\mathcal{M}_i$ will improve $S_k^-$ enough}\label{lexchange:break_if_no_promising_trajs}
        \EndIf
        \If{$g(a|S_k^-) \geq \Delta$}\label{lexchange:found_proposal} 
            \State \textbf{broadcast} $(d,a)$\label{lexchange:broadcast_proposal}
        \EndIf
    \EndFor
\EndFor
\State \textbf{broadcast} $(\nop, \nop)$

\end{algorithmic}
\end{algorithm}

\subsubsection{Lazy Search}
Instead of searching over trajectories in an arbitrary order, we can prioritize the ones that already perform well by themselves, based on $g( a | \emptyset)$ for all $a\in\mathcal{M}_i$ (Alg.~\ref{alg:dls} Line~\ref{dls:pq_sort}). In this fashion, we are more likely to find trajectories that provide sufficient improvement earlier (Alg.~\ref{alg:local_exchange} Lines~\ref{lexchange:found_proposal}--\ref{lexchange:broadcast_proposal}). 
Note that $g( a | \emptyset)$ is typically a byproduct of the trajectory generation process, so it can be saved and reused.

This ordering also allows us to prune unpromising trajectories.
Given the team solution after deletion $S_k^- \defeq S\backslash \{d\}$, the required marginal gain for later adding trajectory $a$ is
\begin{equation}\label{eq:suff_improv_a}
    g(a|S_k^-) \geq \Delta\defeq (1+\frac{\alpha}{N^4}) g(S_k) - g(S_k^-).
\end{equation}
We can prune any $a\in\mathcal{M}_i$, if $g( a | \emptyset)<\Delta$ based on the diminishing return property: because $\emptyset \subseteq S_k^- $, we know that $\Delta > g( a | \emptyset) \geq g( a | S_k^-)$, violating condition~\eqref{eq:suff_improv_a}.
Similarly, all subsequent trajectories $a'$ can be ignored, because their marginal gains $g(a'|\emptyset)\leq g(a|\emptyset)<\Delta$ due to ordering (Alg.~\ref{alg:local_exchange} Lines~\ref{lexchange:no_promising_trajs}--\ref{lexchange:break_if_no_promising_trajs}). 
Lastly, if an addition improves $S_k^-$ sufficiently, the proposal is broadcasted (Alg.~\ref{alg:local_exchange} Lines~\ref{lexchange:found_proposal}--\ref{lexchange:broadcast_proposal}). 

\subsubsection{Greedy Warm Start}
We observe empirically that a robot tends to swap its own trajectories consecutively for small growth in the objective, increasing communication unnecessarily. 
This can be mitigated by a simple technique: when finding local operations initially, we force robots to only propose additions to greedily maximize the objective, until doing so does not lead to enough improvement or violates the matroid constraint. Then robots resume Alg.~\ref{alg:local_exchange} and allow all local operations. By warm starting the team solution greedily, every robot aggregates numerous proposals with smaller increase in the objective into a greedy addition with larger increase, thus effectively reducing communication.

\section{Simulation Results}\label{sec:experiments}
We evaluate \dls{} in two target tracking scenarios based on objective values, computation, communication, and ability to handle heterogeneous robots. Its performance is compared against coordinate descent (\cd{}~\cite{atanasov2015decentralized}), a state-of-the-art algorithm for multi-robot target tracking that, however, assumes monotonicity of the objective. Planning for robots sequentially, \cd{} allows every robot to incorporate the plans of previous robots. We also allow \cd{} to not assign anything to a robot if it worsens the objective. Reduced value iteration~\cite{atanasov2014information} is used to generate trajectories for both algorithms.
Comparisons between \cls{} and \dls{} are omitted because the two algorithms empirically achieve the same average performance. We set $\alpha=1$ arbitrarily, because tuning it was not effective due to the large number of trajectories $N$.

Both \dls{} and \cd{} are implemented in C++ and evaluated in simulation on a laptop with an Intel Core i7 CPU.  For \dls{}, every robot owns separate threads, and executes Alg.~\ref{alg:local_exchange} over 4 extra threads to exploit its parallel structure. Similarly, \cd{} allows every robot to use 4 threads and additionally incorporates accelerated greedy~\cite{minoux1978accelerated} for extra speed-up.

\subsection{Characteristics of Robots}
Given initial state $x_{i,0} \in \mathcal{X}_i$ for robot $i\in\mathcal{R}_S$ who follows the control sequence $ u_{i,0},\dots,u_{i,T-1}=\sigma_i\in S$, the resultant states are $x_{i,1}, \dots, x_{i,T}$ based on dynamics~\eqref{eq:robot_dynamics}. The energy cost $C(S)$ may also be state-dependent. We define it as:
\begin{equation}\label{exp:energy_cost}
    C(S) \defeq \sum_{i\in\mathcal{R}_S} r_i \sum_{t=0}^{T-1} \left( \cctrl_i(u_{i,t}) + \cstate_i(x_{i,t}) \right),
\end{equation}
where the state-dependent cost $\cstate_i(\cdot)$ and control-dependent cost $\cctrl_i(\cdot)$ are defined based on robot types---in our case, robot $i$ is either an unmanned ground vehicle (UGV) or an unmanned aerial vehicle (UAV).  
Note that decomposition between state and control is not required for our framework to work.
The setup for robots are summarized in Table~\ref{tab:robot_setup}.
For simplicity, all robots follow differential-drive dynamics\footnote{We note that any dynamically feasible model can be used for the specific robot which is being planned for. We use the same kinematic model for the quadrotor and ground vehicle for implementation convenience, and because the quadrotors are restricted to a plane to avoid collisions.}
with sampling period $\tau=0.5$ and motion primitives consisting of linear and angular velocities $\{u=(\nu, \omega) \mid \nu \in \{0,8\}\text{ m/s},\ \omega\in\{0,\pm\frac{\pi}{2}\}\text{ rad/s}\}$. We consider muddy and windy regions that incur state-dependent costs for UGVs and UAVs, respectively.
The robots have range and bearing sensors, whose measurement noise covariances grow linearly with target distance. Within limited ranges and field of views (FOVs), the maximum noise standard deviations are $0.1\text{ m}$ and $5^\circ$ for range and bearing measurements, respectively. Outside the ranges or field of views, measurement noise becomes infinite. %
Please refer to~\cite{schlotfeldt2018anytime} for more details.

\begin{table}[b]
    \caption{Robot setup in two experimental scenarios.}\label{tab:robot_setup}
    \centering
        \resizebox{\columnwidth}{!}{%
            \bgroup
            \def\arraystretch{1.3}%
            \begin{tabular}{@{\extracolsep{2pt}} m{0.3cm} ccc cc c c @{}} %
            \toprule
            \multirow{2}{*}{} & \multicolumn{3}{c}{$\cctrl(u)$, $u$ given as} & \multicolumn{2}{c}{$\cstate(x)$, $x$ in} & FOV ($^\circ$) & \multicolumn{1}{c}{Range (m)}\tabularnewline
            \cline{2-4} \cline{5-6} \cline{7-7} \cline{8-8} 
             &$0,0$ & $0,\frac{\pm\pi}{2}$ & $8,\frac{\pm\pi}{2}$ & Mud & Wind & Exp.1\&2 & Exp.1\&2 \tabularnewline 
            \midrule
            UGV & 0 & 1 & 2 & 3 & / & 160 & 6\ \&\ 15\tabularnewline
            UAV & 2 & 2 & 4 & / & 3 & 360 & /\ \&\ 20\tabularnewline
            \bottomrule
            \end{tabular}
            \egroup%
        }
\end{table}

\subsection{Scenario 1: Multi-Robot Dynamic Target Tracking}
Here we show the computation and communication savings for \dls{}, and compare the performance of \dls{} and \cd{} (see Figs.~\ref{fig:dls_analysis} and \ref{fig:dls_obj_control_sensing_time}).
The scenario involves $2,\dots,10$ UGVs trying to estimate the positions and velocities of the same number of dynamic targets. The targets follow discretized double integrator models corrupted by Gaussian noise, with a top speed of $2$~m/s. Robots and targets are spawned in a square arena whose sides grow from $40\text{ m}$ to $60\text{ m}$, and $50$ random trials are run for each number of robots.

\begin{figure} [t!]
\vspace*{0.1in} %
\centering
\includegraphics[width=0.8\columnwidth, trim=10 0 45 13, clip]{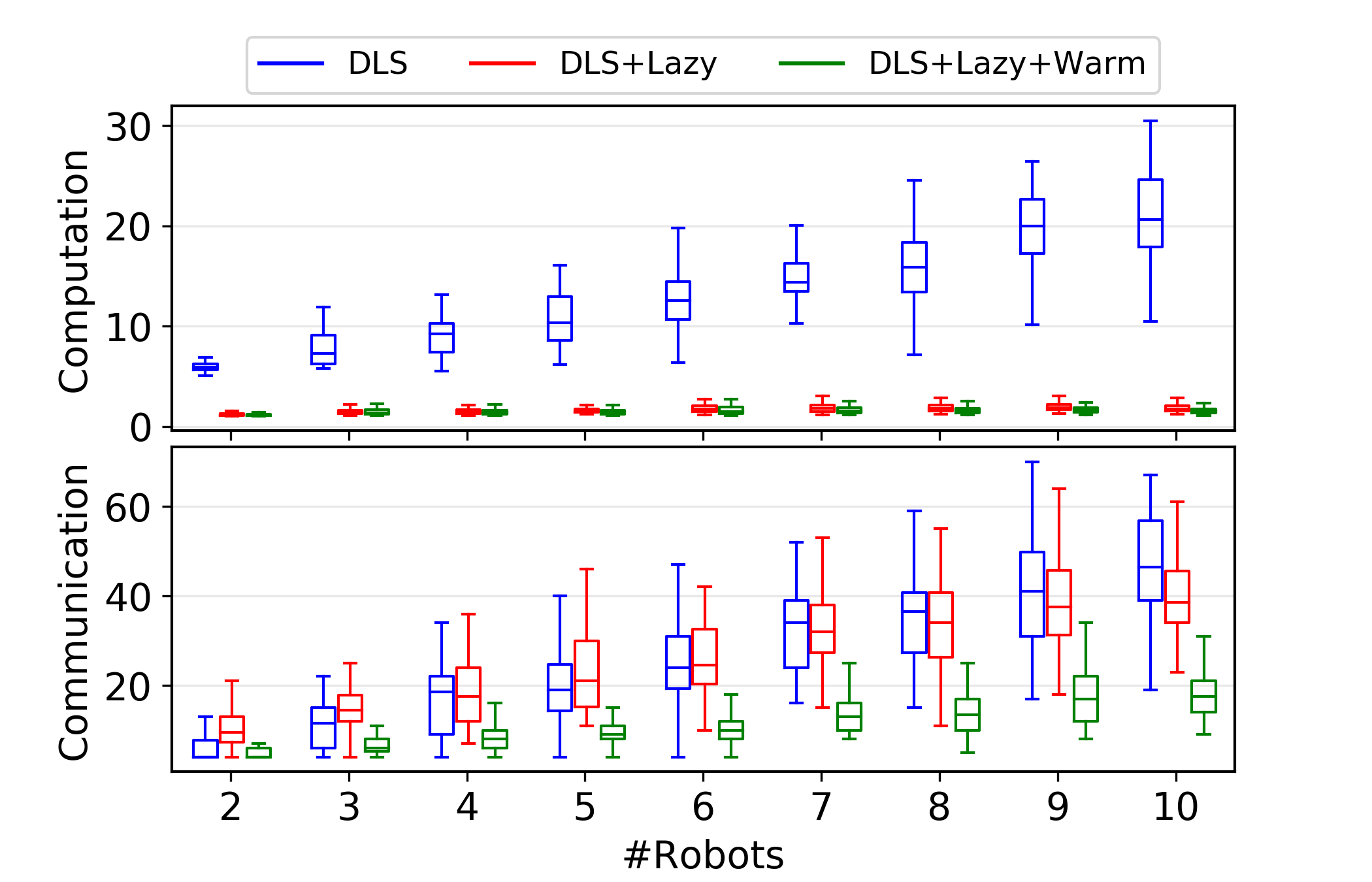}
\vspace*{-.1in}
\caption{Computation and communication savings afforded by lazy search (\textit{Lazy}) and greedy warm start (\textit{Warm}) for \dls{}.
Computation is measured by total oracle calls divided by the number of trajectories $N$, where $N$ reaches around $12500$ for $10$ robots. Communication is measured by the number of proposal exchanges. 
Combining lazy search and greedy warm start (green) leads to 80--92\% computation reduction, and up to 60\% communication reduction compared to the naive implementation (blue) on average.
}\label{fig:dls_analysis}
\end{figure}
\begin{figure} [t!]
\centering
\includegraphics[width=0.99\columnwidth, trim=20 15 20 0, clip]{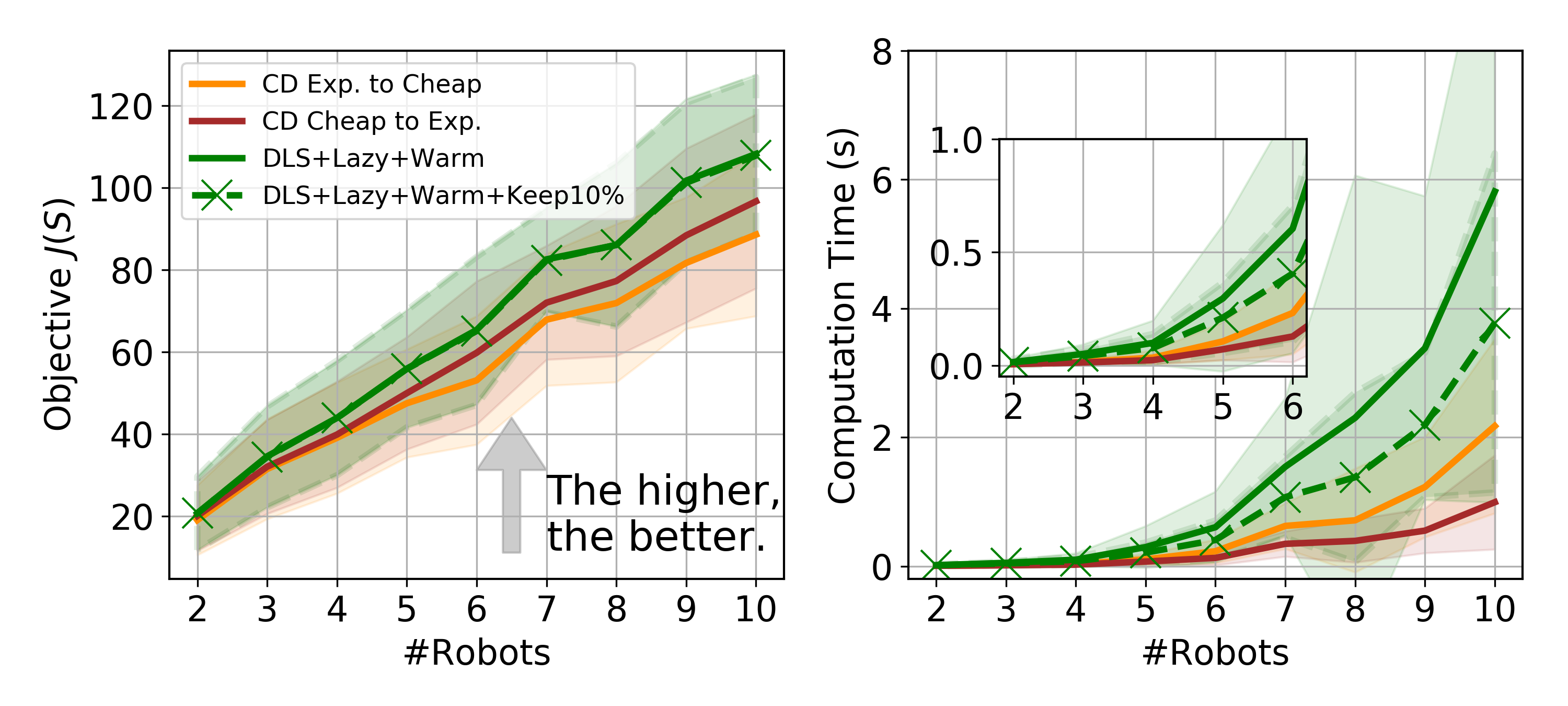}
\vspace*{-.25in}
\caption{
Objective values and computation time (s) for variants of \dls{} and \cd{}, where the lines and shaded areas show the mean and standard deviation, respectively. The time excludes the trajectory generation time ($<2$\text{ s}), which is the same for every algorithm.
\dls{} (solid green) consistently outperforms \cd{} in optimizing the objective, where it is better for \cd{} to plan from cheaper to more expensive robots (brown), rather than the reverse order (orange). The performance gap between \dls{} and \cd{} widens as more costly robots increase non-monotonicity of the problem.
However, \dls{} requires longer run-time, which in practice can be alleviated by using a portion of all trajectories. This invalidates the worst-case guarantee, but \dls{} solution based on the best $10\%$ of each robot's trajectories (green crosses) still outperforms \cd{}.}\label{fig:dls_obj_control_sensing_time}
\end{figure}

Non-monotonicity in the problem is accentuated by an increasing penalty for control effort of additional robots,
by setting  $r_i=i$ for each robot $i$ as defined in~\eqref{exp:energy_cost} (\ie, the $10$-th added robot is $10$ times more expensive to move than the first). Note that state-dependent cost is set to $0$ only for this experiment.
Trajectory generation has parameters $\epsilon=1$ and $\delta=2$ for horizon $T=10$. 
As the planning order is arbitrary for \cd{}, we investigate two planning orders: first from cheaper to more expensive robots, and then the reverse.
Intuitively and shown in Fig.~\ref{fig:dls_obj_control_sensing_time}, the former should perform better, because the same amount of information can be gathered while spending less energy. While other orderings are possible (\eg,\cite{jorgensen2018team,dames2017detecting}), we only use two to show \cd{}'s susceptibility to poor planning order. 
For a fair comparison between \dls{} and \cd{}, we use a fixed set of trajectories generated offline, but ideally trajectories should be replanned online for adaptive dynamic target tracking.

Proposed methods for improving naive distributed execution of local search, namely lazy search (\textit{Lazy}) and greedy warm start (\textit{Warm}), are shown to reduce computation by 80--92\% and communication by up to 60\% on average, as shown in Fig.~\ref{fig:dls_analysis}. 
As expected, when there are few robots with similar control penalties, the objective is still close to being monotone, and \dls{} and \cd{} perform similarly as seen in Fig.~\ref{fig:dls_obj_control_sensing_time}. However, as more costly robots are added, their contributions in information gain are offset by high control penalty, so the problem becomes more non-monotone. Therefore, the performance gap between \dls{} and \cd{} widens, because \cd{} requires monotonicity to maintain its performance guarantee, but \dls{} does not.
From Fig.~\ref{fig:dls_obj_control_sensing_time}, we can see that planning order is critical for \cd{} to perform well, but a good ordering is often unknown a priori. 
Compared to \cd{} which requires only $n-1$ communication rounds for $n$ robots, \dls{} requires more for its performance. 
For practical concerns to save more time, \dls{} with down-sampled trajectories (\eg, keeping the best $10\%$ of each robot's trajectories) still produces better solution than \cd{}, but the guarantee of \dls{} no longer holds.

\begin{figure} [tp!]
\vspace*{.08in} %
\centering
\includegraphics[width=0.7\columnwidth, trim=0 0 20 25, clip]{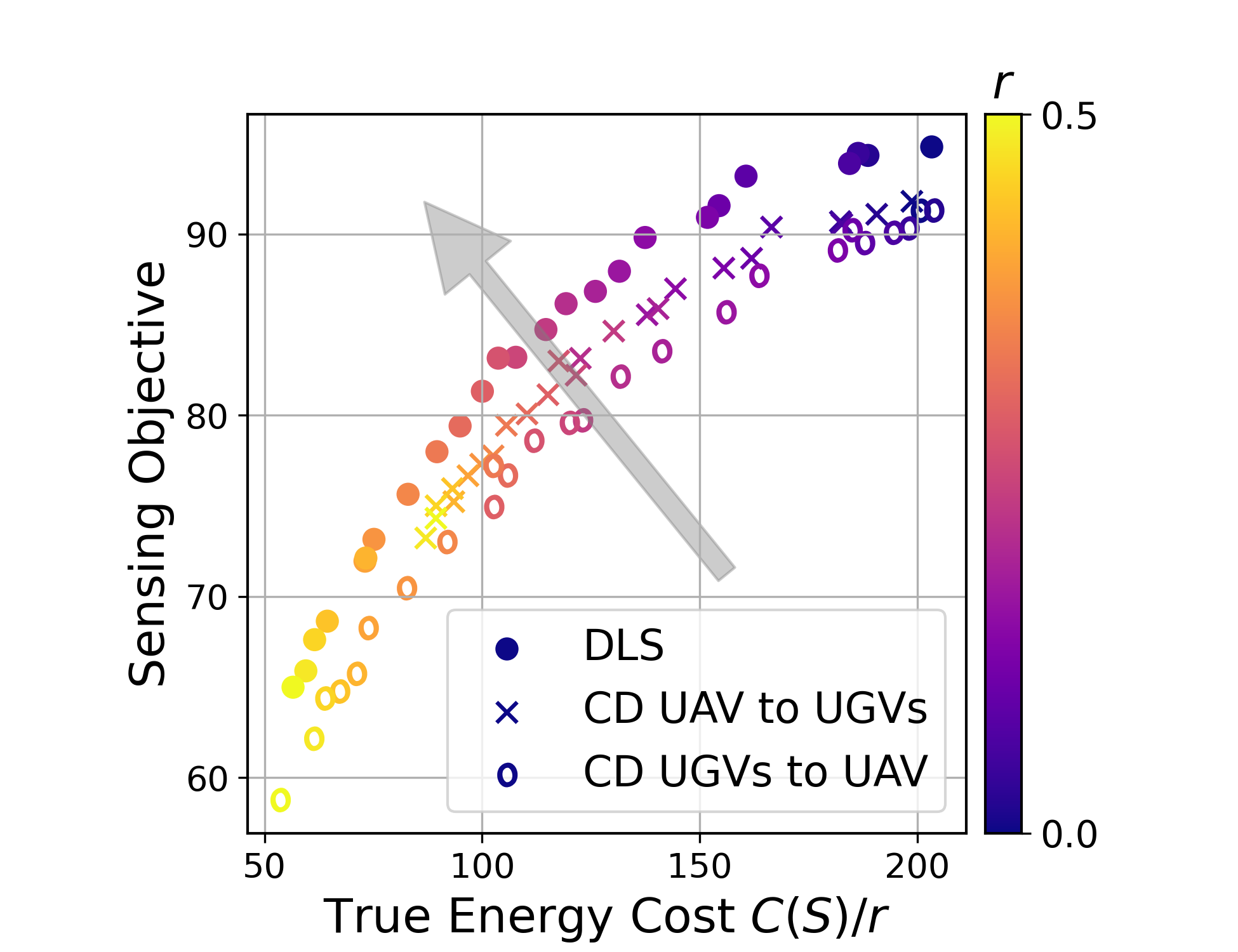}
\caption{Trade-off between sensing performance (mutual information~\eqref{eq:sensing_and_energy_objective}) and the true energy expenditure $C(S)/r$ in heterogeneous robot experiments produced by \dls{} and \cd{}, where it is better to be in the upper left. Each point is an average obtained over $50$ trials for a fixed $r$, where we set $r_i=r$ for each robot $i$ to penalize the team energy expenditure per~\eqref{exp:energy_cost}.}
\label{fig:hetero_tradeoff}
\end{figure}

\begin{figure} [tp!]
\vspace*{.08in} %
\centering
\includegraphics[width=0.95\linewidth, trim=85 55 65 30, clip]{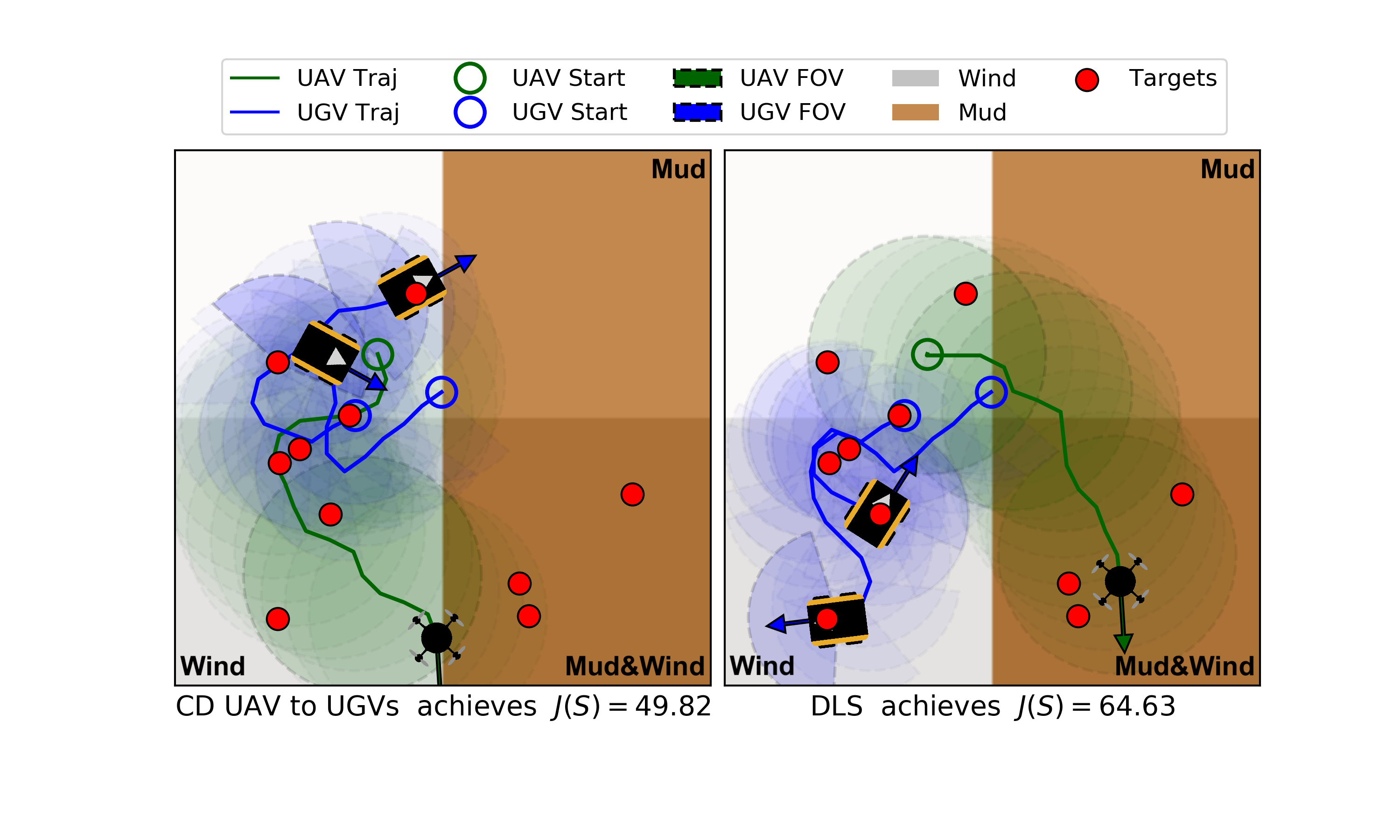}
\vspace*{-.1in}
\caption{Example solutions from \cd{} (left) and \dls{} (right) for 2~UGVs and 1~UAV with $r=0.2$ that penalizes energy cost $C(S)$ in~\eqref{exp:energy_cost}. 
The arena is both windy and muddy, which is costly for the UAV and UGVs, respectively.
(Left) \cd{} performs poorly due to its fixed planning order: the UAV plans first to hover near the targets on the left, rather than venturing over the mud. Thus, the UGVs are under-utilized because they are unwilling to go into the mud to observe the targets on the bottom right. For similar reasons, \cd{} with reversed order under-utilizes the UAV, which is not visualized due to limited space. (Right) In contrast, \dls{} deploys the UAV over the muddy regions, leading to a better value of $J(S)$ in \eqref{eq:sensing_and_energy_objective}.}
\label{fig:hetero_compare}
\end{figure}

\subsection{Scenario 2: Heterogeneous Sensing and Control} \label{sec:exp_hetero}

Now consider a heterogeneous team with 2 UGVs and 1 UAV with different sensing and control profiles (Table~\ref{tab:robot_setup}) tracking $10$ static targets in a $100\text{ m}\times100\text{ m}$ arena over a longer horizon $T=20$ (see Fig.~\ref{fig:hetero_compare}). The UAV has better sensing range and field of view compared to UGVs, but consumes more energy.
The arena has overlapping muddy and windy regions, so robots must collaboratively decide which should venture into the costly regions.
To explore the trade-off between sensing and energy objectives as a team, we set 
$r_i=r, ~\forall i$ and then, as we vary $r$ from $0$ to $0.5$, we run $50$ trials for each value. Robots are spawned in the non-muddy, non-windy region, but targets may appear anywhere. 
We set $\delta=4$ to handle the longer horizon, and evaluate two \cd{} planning orders: from UAV to UGVs, and the reverse.

As shown in Fig.~\ref{fig:hetero_tradeoff}, \dls{} consistently achieves better sensing and energy trade-off than \cd{} on average. To gain intuitions on why \cd{} under-performs, a particular trial given $r=0.2$ is shown in Fig.~\ref{fig:hetero_compare}. Due to the non-monotone objective, the robot who plans first to maximize its own objective can hinder robots who plan later, thus negatively affecting team performance.

\balance
\section{Conclusion}\label{sec:conclusion}
This work considered a multi-robot information gathering problem with non-monotone objective that captures the trade-off between sensing benefits and energy expenditure. We proposed a distributed algorithm based on local search and reduced its computation and communication requirements by using lazy and greedy methods. The proposed algorithm was evaluated in two target tracking scenarios and outperformed the state-of-the-art coordinate descent method. Future work will focus on scaling the algorithm to larger robot teams by exploiting spatial separation, formalizing heterogeneity, and carrying out hardware experiments.

\section*{Acknowledgment}
This research was supported in part by Boeing Research \& Technology and ARL DCIST CRA W911NF-17-2-0181.%
\clearpage

\appendices

\bibliographystyle{IEEEtran}
\bibliography{bibs}

% Generated by IEEEtran.bst, version: 1.14 (2015/08/26)
\begin{thebibliography}{10}
\providecommand{\url}[1]{#1}
\csname url@samestyle\endcsname
\providecommand{\newblock}{\relax}
\providecommand{\bibinfo}[2]{#2}
\providecommand{\BIBentrySTDinterwordspacing}{\spaceskip=0pt\relax}
\providecommand{\BIBentryALTinterwordstretchfactor}{4}
\providecommand{\BIBentryALTinterwordspacing}{\spaceskip=\fontdimen2\font plus
\BIBentryALTinterwordstretchfactor\fontdimen3\font minus
  \fontdimen4\font\relax}
\providecommand{\BIBforeignlanguage}[2]{{%
\expandafter\ifx\csname l@#1\endcsname\relax
\typeout{** WARNING: IEEEtran.bst: No hyphenation pattern has been}%
\typeout{** loaded for the language `#1'. Using the pattern for}%
\typeout{** the default language instead.}%
\else
\language=\csname l@#1\endcsname
\fi
#2}}
\providecommand{\BIBdecl}{\relax}
\BIBdecl

\bibitem{carlone2014active}
L.~Carlone, J.~Du, M.~K. Ng, B.~Bona, and M.~Indri, ``Active slam and
  exploration with particle filters using kullback-leibler divergence,''
  \emph{Journal of Intelligent \& Robotic Systems}, vol.~75, no.~2, pp.
  291--311, 2014.

\bibitem{atanasov2015decentralized}
N.~Atanasov, J.~Le~Ny, K.~Daniilidis, and G.~J. Pappas, ``Decentralized active
  information acquisition: Theory and application to multi-robot slam,'' in
  \emph{IEEE International Conference on Robotics and Automation (ICRA)}, 2015,
  pp. 4775--4782.

\bibitem{lopez2017multi}
E.~L{\'o}pez, S.~Garc{\'\i}a, R.~Barea, L.~M. Bergasa, E.~J. Molinos,
  R.~Arroyo, E.~Romera, and S.~Pardo, ``A multi-sensorial simultaneous
  localization and mapping (slam) system for low-cost micro aerial vehicles in
  gps-denied environments,'' \emph{Sensors}, vol.~17, no.~4, p. 802, 2017.

\bibitem{corah2019distributed}
M.~Corah and N.~Michael, ``Distributed matroid-constrained submodular
  maximization for multi-robot exploration: Theory and practice,''
  \emph{Autonomous Robots}, vol.~43, no.~2, pp. 485--501, 2019.

\bibitem{kumar2004robot}
V.~Kumar, D.~Rus, and S.~Singh, ``Robot and sensor networks for first
  responders,'' \emph{IEEE Pervasive computing}, vol.~3, no.~4, pp. 24--33,
  2004.

\bibitem{tian2018search}
Y.~Tian, K.~Liu, K.~Ok, L.~Tran, D.~Allen, N.~Roy, and J.~P. How, ``Search and
  rescue under the forest canopy using multiple uas,'' in \emph{International
  Symposium on Experimental Robotics}.\hskip 1em plus 0.5em minus 0.4em\relax
  Springer, 2018, pp. 140--152.

\bibitem{queralta2020collaborative}
J.~P. Queralta, J.~Taipalmaa, B.~C. Pullinen, V.~K. Sarker, T.~N. Gia,
  H.~Tenhunen, M.~Gabbouj, J.~Raitoharju, and T.~Westerlund, ``Collaborative
  multi-robot systems for search and rescue: Coordination and perception,''
  \emph{arXiv preprint arXiv:2008.12610}, 2020.

\bibitem{shkurti2012multi}
F.~Shkurti, A.~Xu, M.~Meghjani, J.~C.~G. Higuera, Y.~Girdhar, P.~Giguere, B.~B.
  Dey, J.~Li, A.~Kalmbach, C.~Prahacs \emph{et~al.}, ``Multi-domain monitoring
  of marine environments using a heterogeneous robot team,'' in \emph{2012
  IEEE/RSJ International Conference on Intelligent Robots and Systems}.\hskip
  1em plus 0.5em minus 0.4em\relax IEEE, 2012, pp. 1747--1753.

\bibitem{lan2016rapidly}
X.~Lan and M.~Schwager, ``Rapidly exploring random cycles: Persistent
  estimation of spatiotemporal fields with multiple sensing robots,''
  \emph{IEEE Transactions on Robotics}, vol.~32, no.~5, pp. 1230--1244, 2016.

\bibitem{notomista2020persistification}
G.~Notomista and M.~Egerstedt, ``Persistification of robotic tasks,''
  \emph{IEEE Transactions on Control Systems Technology}, 2020.

\bibitem{popovic2020informative}
M.~Popovi{\'c}, T.~Vidal-Calleja, G.~Hitz, J.~J. Chung, I.~Sa, R.~Siegwart, and
  J.~Nieto, ``An informative path planning framework for uav-based terrain
  monitoring,'' \emph{Autonomous Robots}, pp. 1--23, 2020.

\bibitem{grocholsky2006cooperative}
B.~Grocholsky, J.~Keller, V.~Kumar, and G.~Pappas, ``Cooperative air and ground
  surveillance,'' \emph{IEEE Robotics \& Automation Magazine}, vol.~13, no.~3,
  pp. 16--25, 2006.

\bibitem{sukhatme2007design}
G.~S. Sukhatme, A.~Dhariwal, B.~Zhang, C.~Oberg, B.~Stauffer, and D.~A. Caron,
  ``Design and development of a wireless robotic networked aquatic microbial
  observing system,'' \emph{Environmental Engineering Science}, vol.~24, no.~2,
  pp. 205--215, 2007.

\bibitem{tokekar2016sensor}
P.~Tokekar, J.~Vander~Hook, D.~Mulla, and V.~Isler, ``Sensor planning for a
  symbiotic uav and ugv system for precision agriculture,'' \emph{IEEE
  Transactions on Robotics}, vol.~32, no.~6, pp. 1498--1511, 2016.

\bibitem{rizk2019cooperative}
Y.~Rizk, M.~Awad, and E.~W. Tunstel, ``Cooperative heterogeneous multi-robot
  systems: a survey,'' \emph{ACM Computing Surveys (CSUR)}, vol.~52, no.~2, pp.
  1--31, 2019.

\bibitem{singh2009efficient}
A.~Singh, A.~Krause, C.~Guestrin, and W.~J. Kaiser, ``Efficient informative
  sensing using multiple robots,'' \emph{Journal of Artificial Intelligence
  Research}, vol.~34, pp. 707--755, 2009.

\bibitem{singh2009nonmyopic}
A.~Singh, A.~Krause, and W.~Kaiser, ``Nonmyopic adaptive informative path
  planning for multiple robots,'' in \emph{IJCAI}, 2009.

\bibitem{jorgensen2018team}
S.~Jorgensen, R.~H. Chen, M.~B. Milam, and M.~Pavone, ``The team surviving
  orienteers problem: routing teams of robots in uncertain environments with
  survival constraints,'' \emph{Autonomous Robots}, vol.~42, no.~4, pp.
  927--952, 2018.

\bibitem{lee2009non}
J.~Lee, V.~S. Mirrokni, V.~Nagarajan, and M.~Sviridenko, ``Non-monotone
  submodular maximization under matroid and knapsack constraints,'' in
  \emph{Proceedings of the forty-first annual ACM symposium on Theory of
  computing}, 2009, pp. 323--332.

\bibitem{schlotfeldt2018anytime}
B.~Schlotfeldt, D.~Thakur, N.~Atanasov, V.~Kumar, and G.~J. Pappas, ``Anytime
  planning for decentralized multirobot active information gathering,''
  \emph{IEEE Robotics and Automation Letters}, vol.~3, no.~2, pp. 1025--1032,
  2018.

\bibitem{kantaros2019asymptotically}
Y.~Kantaros, B.~Schlotfeldt, N.~Atanasov, and G.~J. Pappas, ``Asymptotically
  optimal planning for non-myopic multi-robot information gathering.'' in
  \emph{Robotics: Science and Systems}, 2019.

\bibitem{dames2017detecting}
P.~Dames, P.~Tokekar, and V.~Kumar, ``Detecting, localizing, and tracking an
  unknown number of moving targets using a team of mobile robots,'' \emph{The
  International Journal of Robotics Research}, vol.~36, no. 13-14, pp.
  1540--1553, 2017.

\bibitem{viseras2020distributed}
A.~Viseras, Z.~Xu, and L.~Merino, ``Distributed multi-robot information
  gathering under spatio-temporal inter-robot constraints,'' \emph{Sensors},
  vol.~20, no.~2, p. 484, 2020.

\bibitem{levine2010information}
D.~Levine, B.~Luders, and J.~How, ``Information-rich path planning with general
  constraints using rapidly-exploring random trees,'' in \emph{AIAA Infotech@
  Aerospace 2010}, 2010, p. 3360.

\bibitem{best2019dec}
G.~Best, O.~M. Cliff, T.~Patten, R.~R. Mettu, and R.~Fitch, ``Dec-mcts:
  Decentralized planning for multi-robot active perception,'' \emph{The
  International Journal of Robotics Research}, vol.~38, no. 2-3, pp. 316--337,
  2019.

\bibitem{atanasov2015active}
N.~A. Atanasov, ``Active information acquisition with mobile robots,'' Ph.D.
  dissertation, University of Pennsylvania, 2015.

\bibitem{fisher1978analysis}
M.~L. Fisher, G.~L. Nemhauser, and L.~A. Wolsey, ``An analysis of
  approximations for maximizing submodular set functions—ii,'' in
  \emph{Polyhedral combinatorics}.\hskip 1em plus 0.5em minus 0.4em\relax
  Springer, 1978, pp. 73--87.

\bibitem{gupta2010constrained}
A.~Gupta, A.~Roth, G.~Schoenebeck, and K.~Talwar, ``Constrained non-monotone
  submodular maximization: Offline and secretary algorithms,'' in
  \emph{International Workshop on Internet and Network Economics}.\hskip 1em
  plus 0.5em minus 0.4em\relax Springer, 2010, pp. 246--257.

\bibitem{feldman2011unified}
M.~Feldman, J.~Naor, and R.~Schwartz, ``A unified continuous greedy algorithm
  for submodular maximization,'' in \emph{2011 IEEE 52nd Annual Symposium on
  Foundations of Computer Science}.\hskip 1em plus 0.5em minus 0.4em\relax
  IEEE, 2011, pp. 570--579.

\bibitem{gharan2011submodular}
S.~O. Gharan and J.~Vondr{\'a}k, ``Submodular maximization by simulated
  annealing,'' in \emph{Proceedings of the twenty-second annual ACM-SIAM
  symposium on Discrete Algorithms}.\hskip 1em plus 0.5em minus 0.4em\relax
  SIAM, 2011, pp. 1098--1116.

\bibitem{segui2015decentralised}
P.~Segui-Gasco, H.-S. Shin, A.~Tsourdos, and V.~Segui, ``Decentralised
  submodular multi-robot task allocation,'' in \emph{2015 IEEE/RSJ
  International Conference on Intelligent Robots and Systems (IROS)}.\hskip 1em
  plus 0.5em minus 0.4em\relax IEEE, 2015, pp. 2829--2834.

\bibitem{shin2019sample}
H.-S. Shin, T.~Li, and P.~Segui-Gasco, ``Sample greedy based task allocation
  for multiple robot systems,'' \emph{arXiv preprint arXiv:1901.03258}, 2019.

\bibitem{minoux1978accelerated}
M.~Minoux, ``Accelerated greedy algorithms for maximizing submodular set
  functions,'' in \emph{Optimization techniques}.\hskip 1em plus 0.5em minus
  0.4em\relax Springer, 1978, pp. 234--243.

\bibitem{atanasov2014information}
N.~Atanasov, J.~Le~Ny, K.~Daniilidis, and G.~J. Pappas, ``Information
  acquisition with sensing robots: Algorithms and error bounds,'' in \emph{2014
  IEEE International Conference on Robotics and Automation (ICRA)}.\hskip 1em
  plus 0.5em minus 0.4em\relax IEEE, 2014, pp. 6447--6454.

\bibitem{hollinger2014sampling}
G.~A. Hollinger and G.~S. Sukhatme, ``Sampling-based robotic information
  gathering algorithms,'' \emph{The International Journal of Robotics
  Research}, vol.~33, no.~9, pp. 1271--1287, 2014.

\bibitem{lan2013planning}
X.~Lan and M.~Schwager, ``Planning periodic persistent monitoring trajectories
  for sensing robots in gaussian random fields,'' in \emph{2013 IEEE
  International Conference on Robotics and Automation}.\hskip 1em plus 0.5em
  minus 0.4em\relax IEEE, 2013, pp. 2415--2420.

\end{thebibliography}
\end{document}